\documentclass[letterpaper, 10 pt, conference]{ieeeconf}  

\IEEEoverridecommandlockouts                              
\usepackage{graphics} 
\usepackage{epsfig} 
\usepackage{times} 
\usepackage{amsmath} 
\usepackage{amssymb}  
\usepackage{amsfonts}
\usepackage{cite}
\usepackage{xcolor}
\usepackage{url}

\usepackage{amsthm}
\newtheorem{thm}{Theorem}
\newtheorem{prop}{Proposition}

\theoremstyle{remark}
\newtheorem{defn}{Definition}
\newtheorem{prob}{Problem}
\newtheorem{remark}{Remark}
\newtheorem{example}{Example}
\newtheorem{corollary}{Corollary}

\newcommand{\R}{\mathbb{R}}

\newcommand{\X}{\mathcal{X}}
\newcommand{\Z}{\mathcal{Z}}
\newcommand{\Xo}{\mathcal{X}_{\text{unsafe}}}
\newcommand{\Xf}{\mathcal{X}_{\text{safe}}}
\newcommand{\Zf}{\mathcal{Z}_{\text{safe}}}
\newcommand{\Df}{\mathcal{D}_{\text{safe}}}
\newcommand{\Do}{\mathcal{D}_{\text{unsafe}}}
\newcommand{\xgoal}{x_\star}
\newcommand{\zgoal}{z_\star}

\title{\LARGE \bf  Goal-Conditioned Neural ODEs with Guaranteed Safety and Stability\\ for Learning-Based All-Pairs Motion Planning}

\author{Dechuan Liu, Ruigang Wang and Ian R. Manchester
\thanks{*This work was supported in part by the Australian Research Council through projects DP230101014 and IH210100030.}
\thanks{The authors are with the Australian Centre for Robotics (ACFR), and the School of Aerospace, Mechanical and Mechatronic Engineering, The University of Sydney, Sydney, NSW 2006, Australia
        {\tt\small ruigang.wang@sydney.edu.au}}%
}

\begin{document}

\maketitle
\thispagestyle{empty}
\pagestyle{empty}

\begin{abstract}
This paper presents a learning-based approach for all-pairs motion planning, where the initial and goal states are allowed to be arbitrary points in a safe set. We construct smooth goal-conditioned neural ordinary differential equations (neural ODEs) via bi-Lipschitz diffeomorphisms. Theoretical results show that the proposed model can provide guarantees of global exponential stability and safety (safe set forward invariance) regardless of goal location. Moreover, explicit bounds on convergence rate, tracking error, and vector field magnitude are established. Our approach admits a tractable learning implementation using bi-Lipschitz neural networks and can incorporate demonstration data. We illustrate the effectiveness of the proposed method on a 2D corridor navigation task.
\end{abstract}

\section{Introduction}

Motion planning in environments with complex geometric constraints -- arising from obstacles, workspace boundaries, and configuration space structure -- remains a fundamental challenge in robotics. 

Classical sampling-based methods such as probabilistic roadmaps (PRM) \cite{Kavraki1996ProbabilisticRF} and rapidly-exploring random trees (RRT) \cite{LaValle1998RapidlyexploringRT} are simple to implement and widely used, but suffer from certain limitations. In particular, they are inherently finite in their representation: RRTs are generated from a fixed start position (single-query) while PRMs can handle a finitely-many sampled start and end positions (multi-query) \cite{lavalle2006planning}. They do not define a smooth feedback policy from all possible start and end positions (all-pairs). Furthermore, they are inherently model-based and difficult to adapt to learning-based methodologies such as learning from demonstration (LfD) \cite{ravichandar2020recent}.

An emerging paradigm represents the desired motion via a continuous dynamical system whose solutions serve directly as executable trajectories, see e.g. \cite{ab2011gmm, billardLearningAdaptiveReactive2022, pmlr-v162-zhi22a,jinLearningFlexibleNeural2023c,guptaCompactOneshotModelling2026}. This approach has the advantage that it effectively defines a feedback policy, so it is robust, adaptable, and real-time implementable, and compatible with learning frameworks such as LfD when the dynamical models are parameterized, e.g. neural ordinary differential equations (neural ODEs).

The dynamical systems approach to robot motion generation can be traced back to classical approaches such as potential fields \cite{Khatib1985Real} which combine attractive and repulsive forces but suffer can from local minima \cite{Tilove1989localminimum} and instability in narrow passages \cite{Koren1991PotentialFM}. Mitigating these issues remains an ongoing research activity \cite{Huber2024avoidance}. Navigation function, introduced by Koditschek and Rimon \cite{koditschek1990robot}, provide a theoretical foundation that is closely related to the concept of a Lyapunov function. They provide a construction of for simplified ``sphere worlds'', and showed how they can in principle be adapted to more complex but topologically-equivalent spaces via diffeomorphisms, however at the time constructive methods were lacking.

\begin{figure}[!t]
    \centering
    \includegraphics[width=\linewidth]{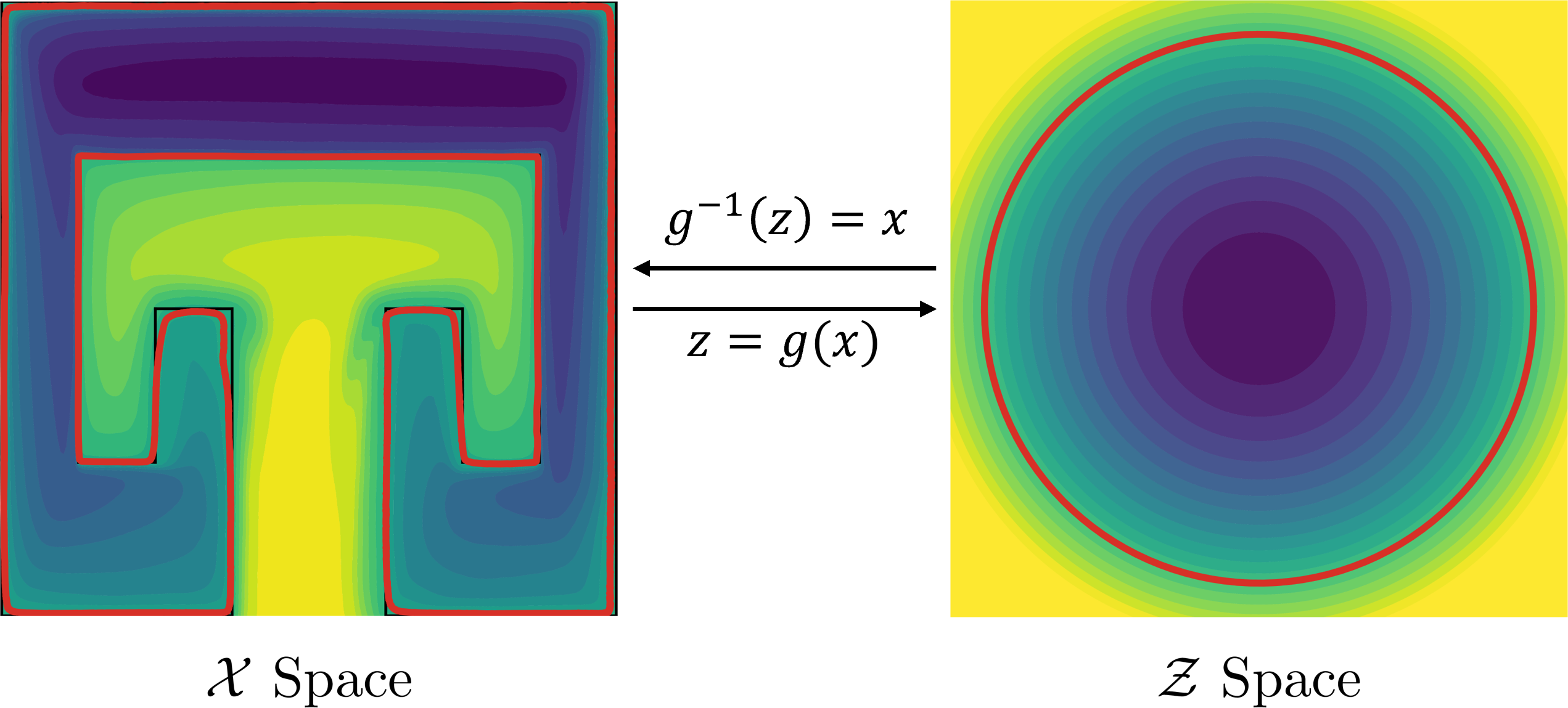}
    \centering
    \caption{
    Our approach is based on learning a bi-Lipschitz diffeomorphism $g$ that maps a geometrically complex safe set $\Xf$ in the $\X$-space (left) onto the unit ball in the $\Z$-space (right). Then simple straight-line point-to-point motions in $\Z$-space can be smoothly pulled back to $\X$ space, defining a goal-conditioned neural ODE which guarantees stability and safety and takes the form of a natural gradient flow.
    }
    \label{fig:boundary_mapping}
\end{figure}

The central challenge therefore is designing (or learning) a dynamical system with the required properties: it should be sufficiently smooth, flexible enough to reproduce the desired task behavior, have some (preferably global) stability properties, and provide the ability to avoid obstacles or other unsafe regions. 

Lyapunov stability theory provides a principled framework for designing stable dynamical systems.
The classical work of Wilson \cite{wilson1967structure}, combined with the resolution of the generalized Poincaré Conjecture \cite{anderson2004geometrization} established that all Lyapunov functions have level sets homeomorphic to spheres, and diffeomorphic for all dimensions other than five. Such results are closely related to the problem of global linearization of nonlinear systems, see \cite{kvalheimGlobalLinearizationHyperbolicity2025} and references therein. This naturally suggests parameterizing Lyapunov functions as the composition of a diffeomorphism and a simple quadratic \cite{wang2024monotone, Cheng2024LearningSA}.

Early work ensuring stability in the dynamical systems approach to motion planning relied on quadratic Lyapunov functions, limiting flexibility \cite{ab2011gmm}. Increasing flexibility via state diffeomorphisms has been explored from several directions. To our knowledge it was first applied the context of stable nonlinear system identification:  
\cite{tobenkin2010convex, tobenkin2017convex} learned contracting dynamics via polynomial diffeomorphisms of the state space, enforced via sum-of-squares programming. The idea was explicitly applied to robot motion planning in \cite{NEUMANN20151}, by integrating diffeomorphisms with the method of \cite{ab2011gmm}, however the class of diffeomorphisms was quite limited. A more flexible class of diffeomorphisms based on Gaussian radial basis function kernels was investigated in \cite{PERRIN201651}. 

More recently, neural network methods based on normalizing flows have been developed \cite{rana2020euclideanizing} and extended to limit cycles \cite{pmlr-v162-zhi22a} and adaptation to environmental changes
\cite{pmlr-v168-zhi22a}. The present paper builds most directly on\cite{Cheng2024LearningSA}, which proposed a class of exponentially stable neural dynamics constructed using \textit{bi-Lipschitz} neural networks \cite{wang2024monotone}. This model class provides not only certified stability but explicit bounds on the rate of stability and potential overshoot, as well as fast splitting-based algorithms for inversion.

Robotics includes many potentially safety-critical or mission-critical applications, and certifying safety of learning-based methods is a major area of current research (see, e.g., the reviews \cite{brunke2022safe, manchester2026neural} and references therein). In the context of the dynamical systems approach to motion planning, a recent work \cite{nawaz2024learning} proposed combining learned neural ODEs with control Lyapunov functions (CLFs) and control barrier functions (CBFs). On the other hand, it is known that existence of a CLF and a CBF separately does not guarantee existence of a compatible CLF-CBF pair \cite{mestres2025converse}, which complicates the learning setup.

In the aforementioned approaches to ensuring stability and safety in dynamical systems LfD, it is generally assumed that the goal-state of the motion is fixed before the learning process, i.e. it is a single-query setup. Changes to the goal state can sometimes be incorporated, but would generally require retraining and/or fresh certification of stability and safety of the resulting motion.

{\bf Contributions.} In this paper, we propose a class of goal-conditioned neural dynamical systems that incorporate built-in guarantees of global exponential stability and safety (safe set forward invariance), for \textit{all} combinations of initial state and goal state within the safe set. The models are compatible with learning-based paradigms for motion planning, such as learning from demonstration. The main contributions are: (1) a systematic approach for constructing goal-conditioned dynamical systems via diffeomorphisms; (2) theoretical results establishing guarantees of safety and exponential stability regardless of goal location; (3) a tractable machine learning formulation for a diffeomorphism; (4) empirical validation confirming the effectiveness of the proposed approach.

{\bf Notation. } A mapping $f:\R^n\rightarrow\R^m$ is said to be of class $C^k$ if it has up to $k$th continuous derivatives. A continuously differentiable mapping $g:\R^n\rightarrow\R^n$ is called a diffeomorphism if it is a bijection and its inverse $g^{-1}$ is also differentiable. Given a $C^1$ function $V:\R^n \rightarrow \R$, its gradient is taken as $\nabla V:=\bigl(\partial V/\partial x\bigr)^\top$. We denote unit ball as $\mathcal{B}^n=\{x\in \R^n:|x|\leq 1\}$, where $|\cdot|$ is the Euclidean norm. Given a set $X\subset\R^n$, we use $\partial X$ and $\mathrm{Int}(X)$ to denote its boundary and interior, respectively.

\section{Preliminaries and Problem Formulation}\label{sec:problem setup}

\subsection{The Dynamic Approach for Motion Planning}

The basic motion planning problem in robotics usually considers the robot dynamics to be \textit{fully-actuated} and \textit{velocity-controlled}, i.e.
\begin{equation}\label{eq:robot-dyn}
    \dot{x}(t)=u(t).
\end{equation}
where $x(t)\in \X \subseteq \mathbb{R}^n$ is the state, e.g., $x(t)$ could be the position of a robot arm's end effector.

While the dynamics are very simple, the difficulty comes from two sources. Firstly, the requirement for collision avoidance and other safety constraints, which can be represented as
\begin{equation} 
   x(t)\in \Xf \quad \forall\, t\geq 0
\end{equation}
where $\Xf\subset \X$ is a safe set that may have complex geometry. We denote by $\Xo$ its complement $\Xo = \X\setminus \Xf$, i.e., the set of unsafe states.

Secondly, the motion task may be complex and only partially specified. While it usually includes motion towards a goal position $\xgoal$, among the infinite variety of possible motions approach the same goal, the desirable ones may be specified only indirectly via a limited set of demonstration data. The robot motion should not only accurately reproduce the training demonstrations, but also generalize to new conditions and react gracefully to disturbances. This generally requires some form of \textit{smoothness} and \textit{stability} of the dynamics.

\subsection{Problem Statement}
In this work, we focus on a learning-based \emph{all-pairs} motion planning problem, where both $x_0$ and $\xgoal$ are allowed to be arbitrary points in $\Xf$. Specifically, we aim to learn a smooth goal-conditioned dynamical system of the form
\begin{equation} \label{eq:system}
    \dot{x}(t) = f(x(t), \xgoal),\quad x(0)=x_0,
\end{equation}
where $f(\xgoal,\xgoal)=0$ for all $\xgoal\in \Xf$, i.e. the goal state is an equilibrium. 

To formalize the desired properties of \eqref{eq:system}, we first recall the following standard definition:
\begin{defn}\label{def:forward_inv}
    For a given dynamical system with state $x(t)$, a set $\mathcal S$ is called \textit{forward invariant} if $x(0)\in \mathcal S$ implies $x(t)\in\mathcal S$ for all $t\geq 0$. 
\end{defn}
We use the following notions of safety and stability for a goal-conditioned system:
\begin{defn}\label{def:safe}
System \eqref{eq:system} is called \emph{safe} w.r.t. the set $\Xf$ if for any goal state $\xgoal \in \Xf$, the set $\Xf$ is forward invariant.
\end{defn}
\begin{defn}[\cite{hines2011equilibrium}]\label{def:exp-convergence}
    System \eqref{eq:system} is globally \emph{equilibrium-independent exponentially stable} if for any initial state $x_0\in\X$ and any equilibrium $\xgoal\in \X$, the solution $x(t)$ satisfies
    \begin{equation*}
        |x(t) - \xgoal| \leq \kappa e^{-\lambda t} |x_0 - \xgoal|, \quad \forall t \geq 0,
    \end{equation*}
    for some $\kappa\geq 1$ and $\lambda>0$. 
\end{defn} 

Here we are interested in the following problem. 
\begin{prob}
   Given training data characterising the safe and unsafe sets:
\begin{equation}\label{eq:datasets}
    \begin{split}
        \mathcal{D}_{\text{safe}}&=\{x_i: x_i\in \Xf\}_{1\leq i\leq M}, \\
        \mathcal{D}_{\text{unsafe}}&=\{x_j: x_j\in \Xo\}_{1\leq j\leq N},
    \end{split}
\end{equation}
and additionally some task-relevant data for the system's desired behaviour \textit{inside} the safe set, e.g. demonstration data:
\begin{equation}
    \mathcal{D}_{\text{demo}}=\{(x_{k}, x_{k,\star}, \dot{x}_{k}): x_k,x_{k,\star}\in \Xf\}_{1\leq k\leq K},
\end{equation}
the goal is to learn a smooth dynamical system of the form \eqref{eq:system} with the following properties:
\begin{itemize}
\item The system \eqref{eq:system} is safe w.r.t. the set $\Xf$.
\item The system \eqref{eq:system} has a known bound on velocity on the safe set: $|f(x,\xgoal)|\le B$ for all $x,\xgoal\in\Xf$.
\item The system \eqref{eq:system} is globally equilibrium-independent exponentially stable.
\item The system \eqref{eq:system} effectively mimics the demonstration data $\mathcal{D}_{\text{demo}}$ inside $\Xf$, or otherwise meets the training objectives, and generalizes smoothly.
\end{itemize}
\end{prob}

We note that the first three objectives can be considered hard constraints, with the caveat that the first requirement depends on the extent that the data sets $\mathcal{D}_{\text{safe}}$ and $\mathcal{D}_{\text{unsafe}}$ accurately represent the true sets $\Xf, \Xo$. 

The fourth requirement is somewhat loose, but will generally be supported by having a sufficiently \textit{flexible} class of models to meet the training objective while ensuring satisfaction of the first three requirements, as well as some possibility to tune the smoothness of the model.

\subsection{Preliminaries on Bi-Lipschitz Diffeomorphisms}

To formulate our approach, we first require some technical machinery. We extensively utilise bi-Lipschitz diffeomorphisms: 
\begin{defn}
\label{def:bi-lip}
    The diffeomorphism $g:\R^n\to\R^n$ is said to be \emph{bi-Lipschitz} if for all $x_1, x_2\in \R^n$, we have
    \begin{equation}
    \label{eq:bi-lipschitz}
        \mu|x_1-x_2|\leq |g(x_1)-g(x_2)|\leq \nu |x_1-x_2|
    \end{equation}
    for some $\nu \geq \mu >0$.
\end{defn} 
Note that $g^{-1}$ is also a bi-Lipschitz diffeomorphism as 
\begin{equation}\label{eq:bi-lipschitz-inverse}
    \frac{1}{\nu} |z_1-z_2|\leq \bigl|g^{-1}(z_1)-g^{-1}(z_2)\bigr|\leq \frac{1}{\mu} |z_1-z_2|.
\end{equation}
Moreover, $g$ also induces a Riemmanian metric
\begin{equation}\label{eq:metric}
    M(x):=G(x)^\top G(x)
\end{equation}
with $G(x)=\partial g(x)/\partial x$ as the Jacobian of $g$ at $x$, where $M$ gives a
notion of local distance \cite{boothby2003introduction}. Since $g$ is bi-Lipschitz, $M$ is uniformly bounded, i.e.,
\begin{equation}\label{eq:uniform-bound}
    \mu^2 I\preceq M(x)\preceq \nu^2I, \quad \forall x\in \R^n.
\end{equation}

\section{Main Theoretical Results}\label{sec:main results}
In this section, we provide a systematic approach to construct an all-pairs motion planner \eqref{eq:system} using bi-Lipschitz diffeomorphisms.

\subsection{All-Pairs Motion Planning via Natural Gradient Flow}
Let $g:\R^n\rightarrow\R^n$ be a bi-Lipschitz diffeomorphism. We take the candidate Lyapunov function as follows
\begin{equation} \label{eq:v-func}
    V(x,\xgoal)= \tfrac{\lambda}{2}|g(x)-g(\xgoal)|^2.
\end{equation}
where $\lambda>0$ is a tunable parameter.
We then construct the dynamics \eqref{eq:system} based on the  natural gradient flow of $V(x,\xgoal)$, i.e., 
\begin{equation}\label{eq:natural_gradient}
    \dot{x} =f(x,\xgoal):= -M(x)^{-1} \nabla_x \nabla V(x, \xgoal).
\end{equation}
The matrix inverse is
 well-defined due to \eqref{eq:uniform-bound},  and the system has a unique equilibrium point at $\xgoal$. Our main theoretical result is as follows.
\begin{thm}\label{thm:main}
    If there exists a bi-Lipschitz diffeomorphism $g:\Xf\rightarrow \mathcal{B}^n$, then the following statements hold:
    \begin{enumerate}
        \item System \eqref{eq:natural_gradient} is safe w.r.t. the set $\Xf$.
        \item The vector field in \eqref{eq:natural_gradient} is bounded 
        \begin{equation}\label{eq:vel_bound}
            |f(x,\xgoal)|\leq \frac{2\lambda}{\mu}.
        \end{equation}
        \item System \eqref{eq:natural_gradient} is globally  equilibrium-independent exponentially stable.
    \end{enumerate}
\end{thm}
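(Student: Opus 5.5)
The plan is to change coordinates to $z=g(x)$, where the dynamics \eqref{eq:natural_gradient} become a linear contraction toward $\zgoal:=g(\xgoal)$, and then read off all three claims from the explicit $z$-trajectory. Throughout I interpret the hypothesis as: $g:\R^n\to\R^n$ is a bi-Lipschitz diffeomorphism with constants $\mu,\nu$ as in Definition~\ref{def:bi-lip}, and its restriction satisfies $g(\Xf)=\mathcal{B}^n$. A global $g$ is needed anyway for \eqref{eq:natural_gradient} and \eqref{eq:uniform-bound} to make sense on all of $\X$.

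\textbf{Step 1: simplify the vector field.} I read the right-hand side of \eqref{eq:natural_gradient} as $-M(x)^{-1}\nabla_x V(x,\xgoal)$. The gradient of \eqref{eq:v-func} is
\[
\nabla_x V=\lambda\, G(x)^\top\bigl(g(x)-g(\xgoal)\bigr).
\]
By \eqref{eq:uniform-bound}, $G(x)$ is invertible, so $M^{-1}G^\top=G^{-1}G^{-\top}G^\top=G^{-1}$. Hence
\[
f(x,\xgoal)=-\lambda\, G(x)^{-1}\bigl(g(x)-g(\xgoal)\bigr).
\]

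\textbf{Step 2: solve explicitly in $z$-coordinates.} Set $z(t)=g(x(t))$. Then $\dot z=G(x)\dot x=-\lambda(z-\zgoal)$, so
\[
z(t)=\zgoal+e^{-\lambda t}\bigl(z_0-\zgoal\bigr).
\]
Conversely, $x(t):=g^{-1}(z(t))$ is $C^1$ and satisfies \eqref{eq:natural_gradient}. The field $f$ is locally Lipschitz, provided $g$ is $C^2$, which I would assume or note. Therefore this is the unique solution, and it exists for all $t\ge0$, so well-posedness comes for free.

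\textbf{Step 3: the three claims.}

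(1) Safety. Take $x_0,\xgoal\in\Xf$, so $z_0,\zgoal\in\mathcal{B}^n$. Then $z(t)=(1-e^{-\lambda t})\zgoal+e^{-\lambda t}z_0$ is a convex combination of two points of the convex ball, so $z(t)\in\mathcal{B}^n$. Hence $x(t)=g^{-1}(z(t))\in\Xf$ for all $t\ge0$, which is forward invariance in the sense of Definition~\ref{def:safe}.

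(2) Velocity bound. From \eqref{eq:uniform-bound}, $G^\top G\succeq\mu^2 I$, so $|G(x)^{-1}|\le 1/\mu$. For $x,\xgoal\in\Xf$ we have $|g(x)-g(\xgoal)|\le \operatorname{diam}\mathcal{B}^n=2$. Together these give $|f|\le 2\lambda/\mu$, which is \eqref{eq:vel_bound}.

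(3) Exponential stability. For arbitrary $x_0,\xgoal\in\R^n$, apply \eqref{eq:bi-lipschitz-inverse} and then \eqref{eq:bi-lipschitz}:
\[
|x(t)-\xgoal|\le \tfrac1\mu|z(t)-\zgoal|=\tfrac1\mu e^{-\lambda t}|z_0-\zgoal|\le \tfrac{\nu}{\mu}e^{-\lambda t}|x_0-\xgoal|.
\]
This is Definition~\ref{def:exp-convergence} with $\kappa=\nu/\mu\ge1$ and rate $\lambda$, uniformly in the goal.

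\textbf{Main obstacle.} The calculations themselves are routine. The delicate point is the hypothesis $g:\Xf\to\mathcal{B}^n$: the argument needs $g$ to be a global bi-Lipschitz diffeomorphism of $\R^n$ that maps $\Xf$ exactly onto the ball. The "onto" part is what gives $g^{-1}(\mathcal{B}^n)\subseteq\Xf$, and hence safety; the "global" part is what gives stability on all of $\X$. I would state this interpretation explicitly at the outset.
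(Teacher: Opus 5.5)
Your proposal is correct and follows the paper's proof essentially step for step. The change of coordinates $z=g(x)$ reduces the natural gradient flow to $\dot z=\lambda(\zgoal-z)$; convexity of $\mathcal{B}^n$ gives safety; the push-forward identity gives the velocity bound; and the bi-Lipschitz chain of inequalities gives exponential stability with $\kappa=\nu/\mu$. Your bound $|G(x)^{-1}|\le 1/\mu$ in Statement 2 is in fact the correct form of the step the paper writes as division by $\|G(x)\|$. Also, your $C^2$ assumption is not needed for uniqueness: for any solution $x(t)$, the curve $g(x(t))$ must solve the linear $z$-equation, which pins it down.
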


\begin{proof}
As shown in \cite{rana2020euclideanizing}, under the coordinate transformation $z=g(x)$, system \eqref{eq:natural_gradient} is equivalent to
\begin{equation}\label{eq:z_dynamics}
    \dot{z}=\lambda(\zgoal-z)
\end{equation}
with $\zgoal=g(\xgoal)$. Thus, system \eqref{eq:natural_gradient} has explicit solutions of
\begin{equation}\label{eq:xt}
    x(t)=g^{-1}(z(t))
\end{equation}
where
\begin{equation}\label{eq:zt}
    z(t)=z_0e^{-\lambda t}+z_\star \bigl(1-e^{-\lambda t}\bigr)
\end{equation}
with $z_0=g(x_0)$. 

Statement 1): The main idea is that under the diffeomorphism $g$, the transformed safe set $\Zf:=\mathcal{B}^n$ which is convex, and from \eqref{eq:zt} we have that $z(t)$ is a straight line from $z_0$ to $\zgoal$. Hence for any $z_0$ and $\zgoal$ in $\Zf$ the path between them remains in $\Zf$, i.e. $z(t)\in \Zf$ for all $t\geq 0$. Passing back to $x(t)=g^{-1}(z(t))$, this implies that  $x(t)\in \Xf$ for all $t\geq 0$.  

Statement 2): The $\Z$-space dynamics \eqref{eq:z_dynamics} is a push forward map of \eqref{eq:natural_gradient}:
\begin{equation}\label{eq:push-forward}
    G(x)f(x,x_\star)=\lambda(z_{\star}-z),
\end{equation}
which further implies
\begin{equation}\label{eq:vf-bound}
    |f(x,x_\star)|\leq \lambda |z-z_\star|/\|G(x)\|\leq \frac{2\lambda}{\mu}
\end{equation}
since $z,z_\star\in \mathcal{B}^n$ and $g$ has lower Lipschitz bound of $\mu$.

Statement 3): Since $g$ is bi-Lipschitz, we obtain  
\begin{equation}\label{eq:x-exp-stable}
    \begin{split}
    |x(t)-\xgoal|&=|g^{-1}(z(t))-g^{-1}(\zgoal)| \leq  \frac{1}{\mu} |z(t)-\zgoal|\\
    &=\frac{1}{\mu}|z_0-z_\star|e^{-\lambda t} \leq \frac{\nu}{\mu}|x_0-\xgoal|e^{-\lambda t}.
\end{split}
\end{equation}
System \eqref{eq:natural_gradient} is equilibrium-independent exponentially stable. 
\end{proof}

\begin{remark}
    While \eqref{eq:natural_gradient} can be considered as a feedback controller to be implemented in real-time for the system \eqref{eq:robot-dyn}, there is also often need to predict future motions in simulation. For this case, we can sample the analytic solution  \eqref{eq:zt}  for $z(t)$ at a grid of points, and pass them in parallel through $g^{-1}$ to obtain $x(t)$. When a fast inverse algorithm is available $g$, as in \cite{wang2024monotone}, this can be done in parallel on a GPU.
\end{remark}

\begin{remark} It can be shown that system \eqref{eq:natural_gradient} is incrementally exponentially stable w.r.t. the incremental Lyapunov $V(x_1,x_2)$. From the contraction theory perspective (\cite{Lohmiller1998Contraction}), system \eqref{eq:natural_gradient} is contracting w.r.t. the metric $M(x)$ in \eqref{eq:metric} \cite[Thm.~1]{yi2023equivalence}, see also \cite{wensing2020beyond}. 
\end{remark}

It is clear from the proof that without any additional effort, we can extend the above theorem by replacing the set $\mathcal{B}^n$ to any convex, compact set $\Zf$ so that $\Xf$ admits more complicated shapes (e.g., with sharp corners).
\begin{corollary}
Suppose that $g:\Xf\rightarrow \Zf$ is a bi-Lipschitz diffeomorphism, where $\Zf\subset\R^n$ is a convex and compact set. Then, Statement 1) and 3) hold. For Statement 2), the vector field is bound
\[
|f(x,\xgoal)|\leq \frac{D\lambda}{\mu},
\]
where $D$ is the diameter of $\Zf$, i.e.,
\[
D:=\max_{z_1,z_2\in \Zf} |z_1-z_2|.
\]
\end{corollary}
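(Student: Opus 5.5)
The plan is to rerun the proof of Theorem~\ref{thm:main}, checking at each step which property of $\mathcal{B}^n$ it actually used. Statement~1) used only convexity, and Statement~3) used only the bi-Lipschitz bounds. Statement~2) used the ball only through the estimate $|z-\zgoal|\le 2$, and this is the sole place where the new constant $D$ appears.

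First I take over the change of coordinates $z=g(x)$ exactly as in the proof of Theorem~\ref{thm:main}. Under \eqref{eq:natural_gradient} the image satisfies $\dot z=\lambda(\zgoal-z)$ with $\zgoal=g(\xgoal)$. This equivalence comes from \eqref{eq:push-forward}: for $M(x)=G(x)^\top G(x)$ and $\nabla_x V=\lambda G(x)^\top(g(x)-g(\xgoal))$ we get $G(x)f=-\lambda(g(x)-g(\xgoal))$. That computation is pointwise and does not depend on the shape of the image set, so the explicit solution \eqref{eq:zt} holds unchanged.

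For Statement~1), take $x_0,\xgoal\in\Xf$, so that $z_0,\zgoal\in\Zf$. By \eqref{eq:zt}, $z(t)$ is the convex combination $e^{-\lambda t}z_0+(1-e^{-\lambda t})\zgoal$ with weight $e^{-\lambda t}\in(0,1]$. Convexity of $\Zf$ then gives $z(t)\in\Zf$ for all $t\ge0$, and applying $g^{-1}$ gives $x(t)\in\Xf$.

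Statement~3) is the chain \eqref{eq:x-exp-stable} verbatim, using \eqref{eq:bi-lipschitz} and \eqref{eq:bi-lipschitz-inverse}. It yields $\kappa=\nu/\mu$ and rate $\lambda$.

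For Statement~2), I start from \eqref{eq:push-forward}. The lower Lipschitz bound gives $|G(x)v|\ge\mu|v|$ for every $v$, because $G(x)^\top G(x)=M(x)\succeq\mu^2 I$ by \eqref{eq:uniform-bound}. Hence
\[
\mu|f(x,\xgoal)|\le |G(x)f(x,\xgoal)|=\lambda|z-\zgoal|\le \lambda D,
\]
where the last step holds because $z,\zgoal\in\Zf$. Compactness guarantees that $D$ is finite and that the maximum defining it is attained.

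The only point needing care is the meaning of bi-Lipschitz when $g$ is defined on $\Xf$ rather than on all of $\R^n$. I would read the hypothesis as saying that $g$ is a bi-Lipschitz diffeomorphism of $\R^n$ (as in Definition~\ref{def:bi-lip}) whose restriction maps $\Xf$ onto $\Zf$. Under that reading, \eqref{eq:uniform-bound} is available at every point, and the dynamics are globally defined, so Statement~3) is truly global. If instead $g$ were only defined on $\Xf$, I would note that the Jacobian bound $|G(x)v|\ge\mu|v|$ still follows at interior points by differentiating \eqref{eq:bi-lipschitz}, and at boundary points by continuity of $G$. The stability estimate would then hold for all $x_0,\xgoal\in\Xf$, which is the relevant domain.
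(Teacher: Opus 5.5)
Your proposal is correct and follows the paper's own argument: the paper obtains the corollary by noting that the proof of Theorem~\ref{thm:main} carries over unchanged. Convexity of $\Zf$ gives safety, the bi-Lipschitz bounds give the stability estimate, and $|z-z_\star|\le D$ replaces $|z-z_\star|\le 2$ in the velocity bound. Your use of $G(x)^\top G(x)\succeq\mu^2 I$, i.e.\ the smallest singular value of $G(x)$ being at least $\mu$, is in fact the correct justification for the step the paper writes loosely as division by $\|G(x)\|$.
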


\begin{remark}
    If $\Xf$ is not diffeomorphic to a ball, e.g., $\Xf$ contains holes, then the approach needs to be modified. The navigation function approach includes strategies for dealing with this via diffeomorphism to a ``sphere world'' in which the free space is a ball with a finite number of ball-shaped obstacles removed \cite{romon1990exact}. Under certain conditions, almost-global stability can still be certified. We leave the details of the extension for a future work.
\end{remark}

\subsection{Safety Properties}

The key property of our approach is that it guarantees safety for arbitrary start/goal pairs. In the control literature, safety in the form of forward invariance of a set is often certified by a barrier function, as defined below (see e.g. \cite{amesControlBarrierFunction2017a}):
\begin{defn}
    Consider a nonlinear system $\dot{x}=f(x,t)$. A continuously differentiable function $h:\R^n\rightarrow \R$ is called a \emph{barrier function} of $\Xf$ if there exist a class $\mathcal{K}$ function $\alpha(\cdot)$ such that 
    \begin{subequations}
    \begin{gather}
        h(x)\geq 0\quad \forall x\in \Xf, \label{eq:B-safe}\\
        h(x)<0\quad \forall x\in \X_{\text{unsafe}}, \label{eq:B-unsafe} \\
        \frac{\partial h}{\partial x}f(x,t)\geq -\alpha(h) \quad \forall x\in \X. \label{eq:B-decay}
    \end{gather}
    \end{subequations}
\end{defn}
Note that the Lyapunov function $V(x,\xgoal)$ in \eqref{eq:v-func} is not a barrier function for $\Xf$ as $V(x,\xgoal)$ may not be a constant for all $x\in \partial \Xf$. The following result gives an explicit construction of barrier function for the proposed goal-conditioned neural ODE \eqref{eq:natural_gradient}.

\begin{prop}\label{prop:barrier}
    Suppose that conditions of Theorem~\ref{thm:main} hold. Then, the forward invariance of $\Xf$ can be certified by the following barrier function 
    \begin{equation}\label{eq:barrier}
        h(x)=1- \left|g(x)\right|^2.
    \end{equation}
\end{prop}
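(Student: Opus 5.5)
The plan is to verify the three conditions \eqref{eq:B-safe}--\eqref{eq:B-decay} directly, working through the coordinate change $z=g(x)$ from the proof of Theorem~\ref{thm:main}. With $h(x)=1-|g(x)|^2=1-|z|^2$, the sign conditions should follow from the fact that $g$ maps $\Xf$ bijectively onto $\mathcal{B}^n$.

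For \eqref{eq:B-safe}: if $x\in\Xf$, then $g(x)\in\mathcal{B}^n$, so $|g(x)|\le 1$ and $h(x)\ge 0$.

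For \eqref{eq:B-unsafe}, I would use that $g$ is a diffeomorphism of $\R^n$ (Definition~\ref{def:bi-lip}) whose restriction to $\Xf$ is onto $\mathcal{B}^n$. If $x\in\Xo$ had $|g(x)|\le 1$, then $g(x)=g(x')$ for some $x'\in\Xf$. Injectivity would give $x=x'\in\Xf$, which is a contradiction. Hence $|g(x)|>1$ and $h(x)<0$.

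For the decay condition \eqref{eq:B-decay}, the push-forward identity \eqref{eq:push-forward}, $G(x)f(x,\xgoal)=\lambda(\zgoal-z)$, gives
\[
\frac{\partial h}{\partial x}f=-2g(x)^\top G(x)f=-2\lambda z^\top(\zgoal-z)=2\lambda|z|^2-2\lambda z^\top\zgoal .
\]
For a goal $\zgoal=g(\xgoal)\in\mathcal{B}^n$, Cauchy--Schwarz gives $z^\top\zgoal\le|z|$. So it suffices to show
\[
2\lambda\bigl(|z|^2-|z|\bigr)\ge -\alpha\bigl(1-|z|^2\bigr).
\]
Let $r=|z|$. Then $r^2-r=-r(1-r)$. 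For $r\le 1$ we have $r(1-r)\le 1-r^2$, because $1-r^2=(1-r)(1+r)\ge r(1-r)$. For $r>1$ the left side $r(r-1)$ is nonnegative. In both cases the left side is at least $-2\lambda(1-r^2)$, that is, at least $-\alpha(h)$ with the linear choice $\alpha(s)=2\lambda s$.

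The main obstacle is that a linear $\alpha$ is class $\mathcal{K}$ only on $[0,\infty)$, while \eqref{eq:B-decay} is required on all of $\X$, including points where $h<0$. Extended class-$\mathcal{K}$ functions are standard for barrier functions, so I would interpret $\alpha(s)=2\lambda s$ as an extended class-$\mathcal{K}$ function on $\R$. The inequality $r(r-1)\ge 2\lambda\cdot$ stuff must then be checked separately for $r>1$: the requirement is $-2\lambda r(1-r)\ge 2\lambda(r^2-1)$, i.e. $r(r-1)\ge(r-1)(r+1)$, which is false. So for $r>1$ I would instead use the exact expression $2\lambda(r^2-z^\top\zgoal)\ge 2\lambda r(r-1)\ge 0$, which is nonnegative. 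A piecewise extended class-$\mathcal{K}$ function that is $2\lambda s$ for $s\ge0$ and any negative increasing function for $s<0$ then satisfies the requirement, since $-\alpha(h)>0$ there but the derivative is still $\ge0\ge-\alpha(h)$ fails. So instead, I would simply note that outside $\Xf$ the derivative $\ge 2\lambda r(r-1)$ is compared against $-\alpha(h)$ with $\alpha(s)=\lambda s$ for $s<0$, which needs $2r(r-1)\ge r^2-1$, i.e. $r\ge1$. This holds. So the final choice is $\alpha(s)=2\lambda s$ for $s\ge 0$ and $\alpha(s)=\lambda s$ for $s<0$.

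Forward invariance of $\Xf=\{h\ge0\}$ then follows from the standard barrier-function argument, consistent with Statement 1) of Theorem~\ref{thm:main}.
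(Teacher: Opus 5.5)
Your final argument is correct and is essentially the paper's proof. The sign conditions come from $g$ mapping $\Xf$ onto $\mathcal{B}^n$. The identity $\dot h=2\lambda\, g(x)^\top\bigl(g(x)-g(\xgoal)\bigr)$ then gives $\dot h\ge-2\lambda h$ on $\Xf$ and $\dot h\ge 2\lambda|g(x)|\bigl(|g(x)|-1\bigr)>0$ on $\Xo$; the paper gets this identity from $G M^{-1}G^\top=I$, and you get it from the push-forward identity, which is the same fact.

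On $\Xo$ you are more careful than the paper, which stops at $\dot h>0$ even though $\alpha(s)=2\lambda s$ fails for $s<0$. In a clean write-up, delete the retracted claim that $-2\lambda(1-r^2)$ is a lower bound when $r>1$ (it is false), and simply take $\alpha(s)=\lambda s$ on all of $\R$, which works on both sets.
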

\begin{proof}
    Since $g:\Xf\rightarrow \mathcal{B}^n$ is a bi-Lipschitz diffeomorphism, then $h(x)$ satisfies \eqref{eq:B-safe} - \eqref{eq:B-unsafe}. The time derivative of $h$ yields
    \begin{equation}
        \begin{split}
            \dot{h}&=2\lambda g(x)^\top G(x)M(x)^{-1}G(x)^\top (g(x)-g(x_\star))\\
            &=2\lambda g(x)^\top(g(x)-g(\xgoal)).
        \end{split}
    \end{equation}
    For $x\in \Xf$, we have  
    \begin{equation*}
        \dot{h}=2\lambda (|g(x)|^2-1)+2\lambda(1-g(x)^\top g(\xgoal)) \geq -2\lambda h(x)
    \end{equation*}
    where the last inequality follows by $g(x),g(\xgoal)\in \mathcal{B}^n$. For $x\in \Xo$ (i.e. $|g(x)|>1$ and $h(x)<0$), we can obtain 
    \begin{equation}
        \begin{split}
            \dot{h}&=2\lambda(|g(x)|^2-g(x)^\top g(\xgoal))\\
            &\geq 2\lambda |g(x)|(|g(x)|-|g(\xgoal)|)>0.
        \end{split}
    \end{equation}
    Thus, \eqref{eq:B-decay} holds and $h(x)$ is a barrier function of $\Xf$.
\end{proof}

\subsection{Time-Varying Goal Location}

When the goal position is time-varying with uncertain but bounded velocity, e.g. to reach for a moving object, the following result shows that our approach still guarantees safety and converges to a bounded region around the goal.
\begin{thm}\label{prop: 1}
    Consider system \eqref{eq:natural_gradient} with time-varying goal $\xgoal(t)$ with $|\dot{x}_{\star}(t)|\leq b$ for all $t\geq 0$. If $g:\Xf\rightarrow\Zf$ is a bi-Lipschitz diffeomorphism, where $\Zf$ is a convex compact set, then the following statements hold:
    \begin{enumerate}
        \item[1)] The set $\Xf$ is forward invariant.
        \item[2)] The time-varying vector field is bounded
        \[
        |f(x,\xgoal(t))|\leq \frac{2\mu}{\lambda}.
        \]
        \item[3)] For any $x_0\in \Xf$, the tracking error $\epsilon(t):=x(t)-\xgoal(t)$ satisfies
        \begin{equation}
            |\epsilon(t)|\leq \frac{\nu}{\mu}\left(|\epsilon(0)|e^{-\lambda t}+\frac{b}{\lambda}\right).
        \end{equation}
    \end{enumerate} 
\end{thm}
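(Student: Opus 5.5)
The plan is to work entirely in the $\Z$-coordinates $z=g(x)$, as in the proof of Theorem~\ref{thm:main}, and to treat the moving goal as a time-varying input. Throughout I assume, as is implicit in the statement, that $\xgoal(t)\in\Xf$ for all $t\ge 0$ and that $\xgoal(\cdot)$ is absolutely continuous with $|\dot x_\star|\le b$. Set $\zgoal(t):=g(\xgoal(t))\in\Zf$.

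The push-forward identity \eqref{eq:push-forward} is pointwise in $x$ and $\xgoal$, so it still holds with a time-varying goal. It gives
\begin{equation*}
\dot z=\lambda\bigl(\zgoal(t)-z\bigr).
\end{equation*}
By variation of constants this has the explicit solution
\begin{equation*}
z(t)=e^{-\lambda t}z_0+\int_0^t \lambda e^{-\lambda(t-s)}\zgoal(s)\,ds .
\end{equation*}

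\textbf{Statement 1).} The weights $e^{-\lambda t}$ and $\lambda e^{-\lambda(t-s)}$ are nonnegative and integrate to $e^{-\lambda t}+(1-e^{-\lambda t})=1$. Hence $z(t)$ is a (continuous) convex combination of $z_0$ and the points $\zgoal(s)$, all of which lie in $\Zf$. Since $\Zf$ is convex and closed, $z(t)\in\Zf$. To make this rigorous I would approximate the integral by Riemann sums, which are finite convex combinations lying in $\Zf$, and then pass to the limit using closedness. Pulling back through $g^{-1}$ gives $x(t)\in\Xf$ for all $t\ge0$.

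\textbf{Statement 2).} Because $z(t)$ and $\zgoal(t)$ both lie in $\Zf$, the same argument as \eqref{eq:vf-bound} applies. Using $|G(x)v|\ge\mu|v|$, it gives $|f(x,\xgoal(t))|\le \lambda|z-\zgoal|/\mu\le D\lambda/\mu$, which equals $2\lambda/\mu$ when $\Zf=\mathcal B^n$. I believe the constant printed as $2\mu/\lambda$ is a typo for $2\lambda/\mu$, consistent with \eqref{eq:vel_bound}, and I would prove the bound in that form.

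\textbf{Statement 3).} Define the $\Z$-space error $e(t):=z(t)-\zgoal(t)$. It satisfies
\begin{equation*}
\dot e=-\lambda e-\dot z_\star .
\end{equation*}
By the chain rule, $\dot z_\star=G(\xgoal)\dot x_\star$. Since $\|G\|\le\nu$ by \eqref{eq:uniform-bound}, this gives $|\dot z_\star|\le\nu b$.

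Rather than differentiating $|e|$, which fails to be smooth at $e=0$, I would apply variation of constants directly:
\begin{equation*}
e(t)=e^{-\lambda t}e(0)-\int_0^t e^{-\lambda(t-s)}\dot z_\star(s)\,ds .
\end{equation*}
Taking norms yields $|e(t)|\le e^{-\lambda t}|e(0)|+\nu b/\lambda$.

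Finally I would convert back to $\X$-space using the bi-Lipschitz bounds \eqref{eq:bi-lipschitz} and \eqref{eq:bi-lipschitz-inverse}. These give $|\epsilon(t)|\le |e(t)|/\mu$ and $|e(0)|\le\nu|\epsilon(0)|$. Combining,
\begin{equation*}
|\epsilon(t)|\le \frac{1}{\mu}\Bigl(\nu|\epsilon(0)|e^{-\lambda t}+\frac{\nu b}{\lambda}\Bigr),
\end{equation*}
which is exactly the claimed bound.

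The main obstacle is the regularity bookkeeping rather than any estimate. One must justify that the push-forward identity and the chain rule hold along a merely Lipschitz goal trajectory, with the equations understood almost everywhere. One must also make sure that the convex-combination argument in Statement 1) uses closedness of $\Zf$ correctly. Everything else reduces to the linear $\Z$-space dynamics.
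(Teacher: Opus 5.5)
Your proof is correct. For Statements 2) and 3) it matches the paper's argument: push the dynamics forward to $\dot z=\lambda(\zgoal(t)-z)$, use $|\dot z_\star|\le\nu b$, bound the $\Z$-space error, and pull back with the bi-Lipschitz constants. The paper simply asserts $|\epsilon_z(t)|\le|\epsilon_z(0)|e^{-\lambda t}+\nu b/\lambda$; your variation-of-constants formula supplies the one-line justification.

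The genuine difference is in Statement 1). The paper invokes Nagumo's theorem: for $z\in\partial\Zf$ and $\zgoal(t)\in\Zf$, convexity puts $\lambda(\zgoal(t)-z)$ in the tangent cone of $\Zf$. You instead write $z(t)$ explicitly as a weighted average of $z_0$ and $\{\zgoal(s)\}_{0\le s\le t}$, with nonnegative weights of total mass one. Your route has several advantages:
\begin{itemize}
\item it is elementary;
\item for this statement it uses only closedness and convexity of $\Zf$, not compactness;
\item it avoids the care needed to apply Nagumo's theorem to a time-varying field;
\item it reuses the same formula as Statement 3).
\end{itemize}
The paper's route does not rely on linearity of the $\Z$-dynamics. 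It therefore extends to other $\Z$-space fields that point into $\Zf$ but have no closed-form solution.

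On Statement 2) you are right on both counts. The printed $2\mu/\lambda$ should read $2\lambda/\mu$. For a general convex compact $\Zf$ the correct constant is $D\lambda/\mu$, as in the Corollary; the paper's appeal to \eqref{eq:vf-bound} tacitly assumes $\Zf=\mathcal B^n$. Your justification via $|G(x)v|\ge\mu|v|$ is also the right one. If $\|G(x)\|$ denotes the operator norm, the intermediate quantity $\lambda|z-\zgoal|/\|G(x)\|$ in \eqref{eq:vf-bound} is actually a lower bound on $|f|$, not an upper bound.

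Finally, the regularity issue you flag is milder than you suggest. The map $\zgoal=g\circ\xgoal$ is $\nu b$-Lipschitz directly from \eqref{eq:bi-lipschitz}. That already gives absolute continuity and $|\dot z_\star|\le\nu b$ almost everywhere, with no chain rule along the goal path.
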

\begin{proof}
    Statement 1): System \eqref{eq:natural_gradient} with time-varying $\xgoal(t)$ can also be transformed into $\dot{z}=\lambda(z_\star(t)-z)$ with $z_{\star}(t)=g(x_\star(t))$. Since $\Zf$ is compact and convex, then the time-varying vector field $\lambda(z_\star(t)-z)$ always points into $\Zf$ or is tangent to $\partial \Zf$ for any $z\in\partial \Zf$ and $z_{\star}(t)\in \Zf$. By Nagumo's theorem \cite{nagumo1942lage} we obtain that $\Zf$ is forward invariant, implying that $\Xf$ is forward invariant under \eqref{eq:natural_gradient}. 

    Statement 2) follows directly by \eqref{eq:push-forward} and \eqref{eq:vf-bound}. We now focus on Statement 3). First, the dynamics of $\epsilon_z(t):=z(t)-z_\star(t)$ in the $\Z$-space can be rewritten as
    \[
    \dot{\epsilon}_z=-\lambda \epsilon_z-\dot{z}_{\star}(t),
    \]
    where $|\dot{z}_\star(t)|\leq \nu b$. This implies $|\epsilon_z(t)|\leq |\epsilon_z(0)|e^{-\lambda t}+\nu b/\lambda $. Finally, following the procedure in \eqref{eq:x-exp-stable} yields
    \[
    \begin{split}
        |\epsilon(t)|\leq \frac{1}{\mu}|\epsilon_z(t)|
        \leq \frac{\nu}{\mu}\left(|\epsilon(0)|e^{-\lambda t}+\frac{b}{\lambda}\right).
    \end{split}
    \]
\end{proof}

\subsection{Finite-time Convergence via Euclidean Norm Potential}
Similar to \cite{rana2020euclideanizing}, the proposed Lyapunov function $V(x,\xgoal)$ can incorporate a more general potential function $\Phi:\R^n\rightarrow \R$, i.e.,
\begin{equation}
    V(x,\xgoal)=\Phi(g(x)-g(\xgoal)).
\end{equation}
Then, the dynamics of \eqref{eq:natural_gradient} in the $\Z$-space becomes
\begin{equation}\label{eq:z-phi}
    \dot{z}=-\nabla_z \Phi(z-\zgoal)
\end{equation}
which can be pulled back in to $\X$ space.
If $\Phi$ continuously differentiable and satisfies the \emph{Polyak-\L{}ojasiewicz} (PL) condition \cite{polyak1963gradient,lojasiewicz1963topological}, then global exponential stability can still be established. 

If finite-time convergence is desired, then  $\Phi$ can be taken as the Euclidean norm (i.e., $\Phi(z)=\lambda|z|$), instead of the norm squared, and then system \eqref{eq:z-phi} becomes
\begin{equation}
    \dot{z}=-\lambda \frac{z-\zgoal}{|z-\zgoal|},
\end{equation}
although the dynamics is not smooth at the goal $\zgoal$. Since the resulting trajectory has unit velocity in $\Z$ space, we can, analogously to Theorem~\ref{thm:main}, obtain simple upper and lower bounds on the vector field velocity in the $\X$-space, i.e.,
\begin{equation}
    \frac{\lambda}{\nu} \leq  |f(x,\xgoal)|\leq \frac{\lambda}{\mu}.
\end{equation}

\subsection{Natural Gradient Flow {\it v.s.} Gradient Flow}
A natural question for the proposed approach is: what advantages does natural gradient flow offer over standard gradient flow? Our answer is as follows: standard gradient flow does not provide safety guarantees when $\xgoal$ varies.

Given a Lyapunov function $V(x,\xgoal)$ in \eqref{eq:v-func}, we consider the following gradient flow dynamics:
\begin{equation}\label{eq:grad-flow}
    \dot{x}=-\nabla_x V(x, \xgoal).
\end{equation}
From \cite[Thm.~1]{Cheng2024LearningSA}, we can conclude that the above system achieves equilibrium-independent exponential stability. However, it cannot provide safety guarantees for all $\xgoal\in \Xf$, see the example below.

\begin{example}
\label{ex:shear_transformation}
Consider the mapping $g:\R^2\to\R^2$ defined by 
\begin{equation}\label{eq:g analytical}
    g(x) =
    \begin{bmatrix} 1 & 0 \\ h(x_1) & 1 \end{bmatrix}
    \begin{bmatrix} x_1 \\ x_2 \end{bmatrix}
\end{equation}
where $h(x_1) = 2\sin(x_1) + \cos(5x_1) - 3x_1$. It is a bi-Lipschitz diffeomorphism with $g^{-1}$ defined by $x_1=z_1$ and $x_2=z_2-h(z_1)z_1$. We take $\Zf = \mathcal{B}^2$ and $\Xf = g^{-1}(\Zf)$. 

\begin{figure}[!tb]
    \centering
    \begin{tabular}{c}
        \includegraphics[width=0.85\linewidth]{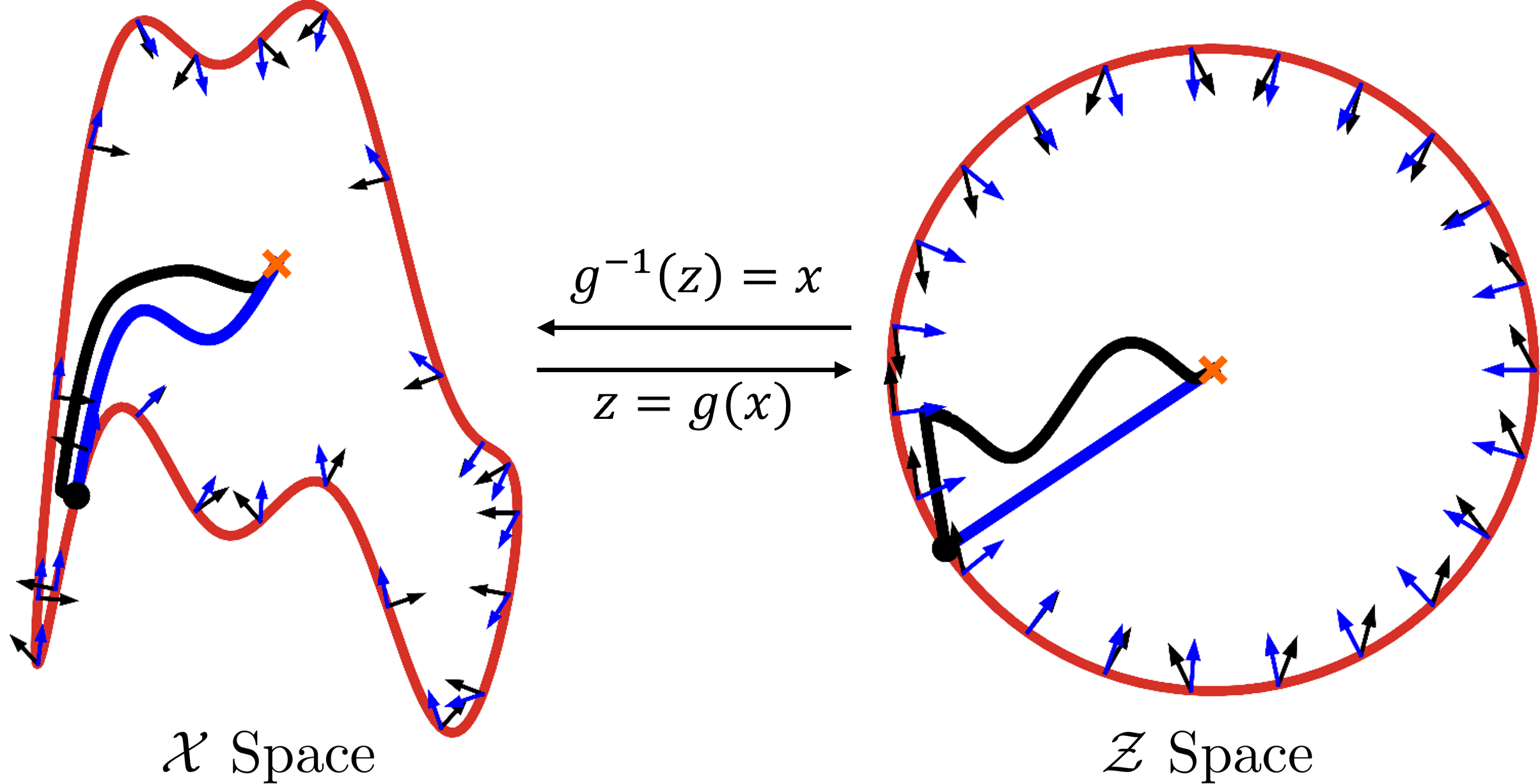} \\
        (a) $\xgoal=(0, 0)$ \\
        \includegraphics[width=0.85\linewidth]{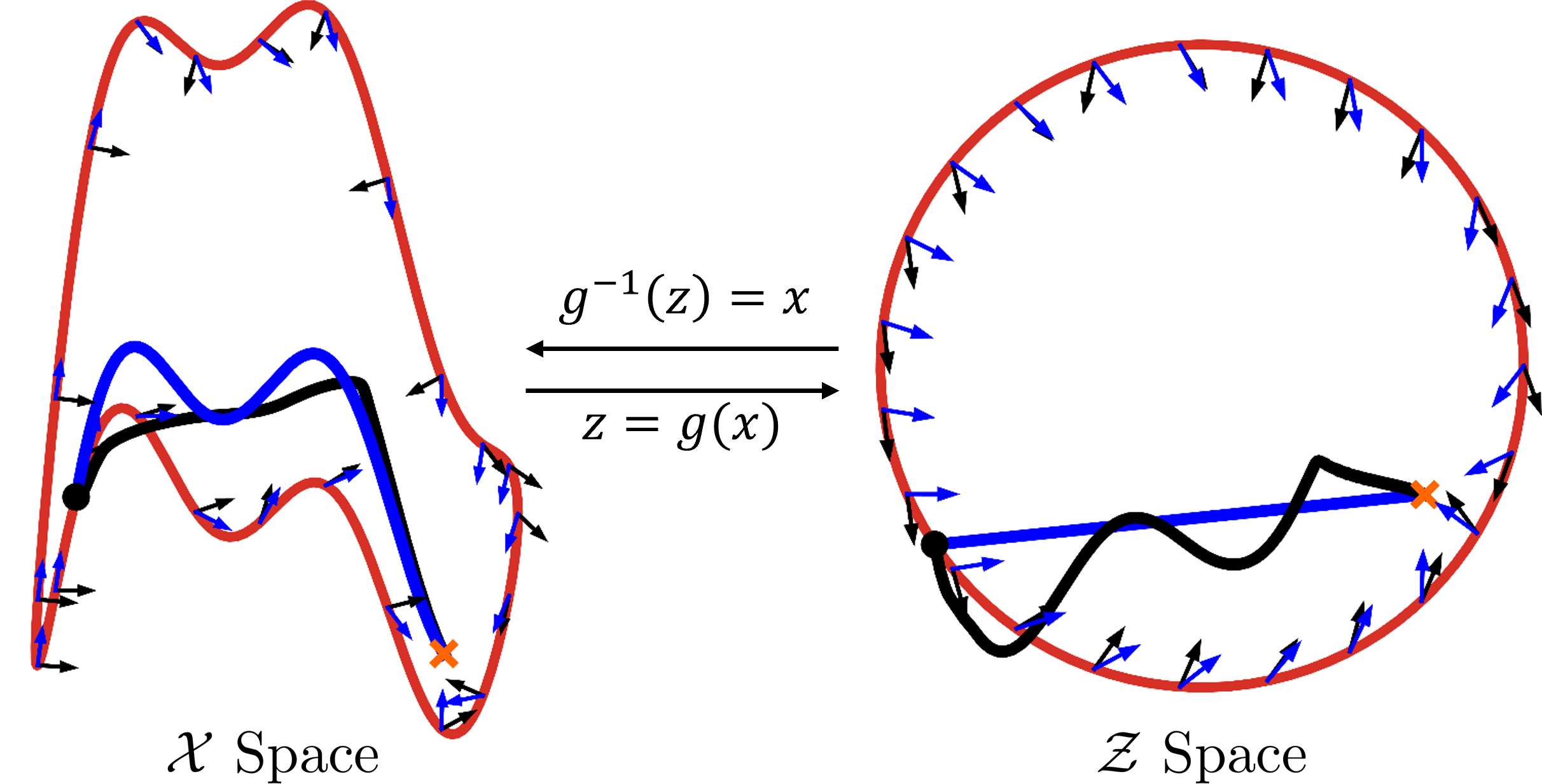} \\
        (b) $\xgoal\neq (0, 0)$
    \end{tabular}
    \caption{Trajectory samples and vector field on the boundary for the natural gradient flow \eqref{eq:natural_gradient} (blue) and the gradient flow \eqref{eq:grad-flow} (black) with different goal points, where red curves are the boundaries. }
    \label{fig:ngd-vs-gd}
\end{figure}

When $\xgoal=(0,0)$, we have that $V(x,\xgoal)$ is a barrier function for both \eqref{eq:natural_gradient} and \eqref{eq:grad-flow}. Thus, $\Xf$ is forward invariant in both cases, see Fig.~\ref{fig:ngd-vs-gd}(a). When $\xgoal$ changes, it is no longer a barrier function as $V(x,\xgoal)$ is not a constant for $x\in \partial \Xf$. Fig.~\ref{fig:ngd-vs-gd}(b) shows that $\Xf$ is no longer a forward-invariant set for \eqref{eq:grad-flow}. This is also supported by the fact that its vector field points outward at some part of the boundary. For the proposed approach, $\Xf$ is forward invariant as the vector field of \eqref{eq:natural_gradient} always points inward for all $\xgoal\in \Xf$. A barrier function $h(x)$ can be constructed via \eqref{eq:barrier}.

\end{example}

\subsection{Comparison with Navigation Function based Approach}

When the goal state $\xgoal$ is fixed, a classical approach \cite{koditschek1990robot} to construct system \eqref{eq:system} is via gradient flow
\begin{equation}\label{eq:navigation-func}
    \dot{x} = -\nabla \phi(x)
\end{equation} 
where $\phi : \Xf \to [0,1]$ is a \emph{navigation function} satisfying the following conditions: 
\begin{itemize}
    \item[N1)] $\phi$ is Morse function (i.e., $\phi$ is smooth and it has no degenerate critical point);
    \item[N2)] $\phi$ has a unique minimum on $\Xf$ at $\xgoal$ and no other critical points;
    \item[N3)] $\nabla\phi$ is bounded on $\Xf$; 
    \item[N4)] $\phi(x)=1$ for all $x\in \partial \Xf$.
\end{itemize}
The navigation function $\phi$ serves as both a Lyapunov function and a barrier function since 
\[
\dot{\phi}=-|\nabla \phi(x)|^2 <0, \quad \forall x \in \Xf, \;x\neq \xgoal.
\]
Note that $\phi(x)=|g(x)|^2$ with $g:\Xf\rightarrow \mathcal{B}^n$ and $g(\xgoal)=0$ is a validate navigation function. However, one needs to recompute $g$ when $\xgoal$ changes. 

Compared with the navigation based approach, our method is more flexible as it does not require recomputing $g$ when $\xgoal$ changes since it uses different certificate functions for stability and safety, although both are expressed in terms of a single learned diffeomorphism $g$.

\subsection{Compared with Existing Diffeomorphism based Dynamical Approaches}

The diffeomorphism based approach has also been recently explored for learning stable neural ODE from demonstration, see \cite{rana2020euclideanizing,pmlr-v162-zhi22a,pmlr-v168-zhi22a}. Specifically, those approaches take the following gradient flow in the $\Z$-space:
\begin{equation}\label{eq:z-gen}
    \dot{z}=-\nabla \Phi(z)
\end{equation}
where the potential function $\Phi$ is positive definite, convex, continuously differentiable,
and radially unbounded. And a natural gradient dynamics in the $\X$-space is constructed by pulling \eqref{eq:z-gen}  back to the $\Xf$-space via a diffeomorphism $g$. The primary goal of those approaches is to learn stable dynamics that mimics the demonstration data. 

Different from those approaches, our method can learn both stable and safe dynamics from data. The second difference is that our approach can generalize to unseen goal point without retraining the model. Finally, our approach imposes explicit bounds on the diffeomorphism $g$, which can be seen as effective regularization preventing overfitting. Meanwhile, explicit bounds on tracking error and vector field magnitude can be obtained, which is useful for practical applications.

\section{Learning an All-Pairs Motion Planner}\label{sec:learning}

In this section, we aim to translate the above theoretical construction into a tractable machine learning setup, detailing the choice of training data, model class, and loss functions. 

We parameterize the diffeomorphism $g$ by some smooth bi-Lipschitz neural network $g_\theta:\R^n\to\R^n$ with $\theta\in \R^p$ as the learnable parameter.
By construction, system \eqref{eq:natural_gradient} is smooth and globally equilibrium-independent experientially stable, as shown in Theorem~\ref{thm:main}. To ensure safety, one needs to learn a $g_\theta$ that can be used to characterize the set $\Xf$. Since a candidate Lyapunov function $V(x,\xgoal)$ is used in the model construct, we seek to separate the safe and unsafe datasets in \eqref{eq:datasets} via a  Lyapunov sublevel set of $V$. Specifically, we pick up a point $\hat{x}_{\star} \in \Df $ as the goal and take $g_\theta(\hat{x}_{\star})=0$. Then, the Lyapunov function in \eqref{eq:v-func} can be written as
\begin{equation}\label{eq:vtx}
    \tilde{V}_\theta(x)=V(x,\hat{x}_\star)=\frac{\lambda}{2}|g_\theta(x)|^2,
\end{equation}
whose sublevel sets are $\Omega_{\theta}^{c}=\{x: \tilde{V}_\theta(x)\leq c\}$ with $c>0$. Note that $\Omega_{\theta}^{c}$ is diffeomorphic to $\mathcal{B}^n$ for any $\theta\in \R^p$ and $c>0$. Now, the learning problem becomes: find a pair $(c,\theta)$ such that 
\begin{subequations}\label{eq:levelset}
    \begin{align}
        x_i\in \Df\; &\Rightarrow\; x_i\in \Omega_\theta^c, \label{eq:levelset-safe}\\
        x_j\in \Do\; &\Rightarrow \; x_j \notin \Omega_\theta^c \label{eq:levelset-unsafe}.
    \end{align}
\end{subequations}
\subsection{Training Data}
To solve the above learning problem, we need to assign labels to the points from the datasets $\Df$ and $\Do$. An intuitive approach is to associate $x_i\in \Df$ and $x_j\in \Do$ with labels of 0 and 1, respectively. The learning problem in \eqref{eq:levelset} is formulated as a classification task, where $\tilde{V}_\theta(x)$ is the classifier. However, those labels do not provide informative geometric information in $\Df$ and $\Do$. 

By leveraging the existing sampling-based motion planning algorithms (e.g. PRM \cite{Kavraki1996ProbabilisticRF} or RRT \cite{LaValle1998RapidlyexploringRT}), we can assign each point $x_i\in\Df$ with a label $c_i$ indicating the shortest path length from $x_i$ to the targe $\hat{x}_{\star}$. Specifically, we first construct a graph by connecting each $x_i$ to a set of its nearby neighbors in $\Df$. From this graph, we can define the cost-to-go function $d:\mathcal{D}_{\text{safe}}\to\R_{\geq0}$ as the shortest path distance from sample $x\in \mathcal{D}_{\text{safe}}$ to the goal $\hat{x}_\star$, which is a proxy for the distance between $x_i$ and $\hat{x}_{\star}$. The function $d(x)$ naturally reflects the geometry of $\Xf$: samples near the goal attain small values, while samples that are distant or geometrically separated from $\hat{x}_\star$ attain large values. Thus, $d(x)$ provides meaningful information for training $\tilde{V}_\theta(x)$. Then, we construct the training datasets as follows:
\begin{subequations}
    \begin{align}
        \bar{\mathcal{D}}_{\text{safe}}&=\{(x_i,c_i): c_i=d(x_i),\, x_i\in \Df\}\label{eq:d_safe} \\
        \bar{\mathcal{D}}_{\text{unsafe}}&=\{(x_j,c_j): c_j=\bar{c}+\delta,\,x_j\in \Do\}\label{eq:d_unsafe}
    \end{align}
\end{subequations}
where $\bar{c}$ is the maximum label value in $\bar{\mathcal{D}}_{\text{safe}}$, and $\delta>0$ is a hyperparameter which ensures that there exists $c\in [\bar{c},\bar{c}+\delta]$ satisfying \eqref{eq:levelset}.

\subsection{Model Class}
In this work, we use the BiLipNet \cite{wang2024monotone} as the model class for $g_\theta$. BiLipNets can enforce certified bi-Lipschitz bounds $\mu$ and $\nu$ via a method derived from \cite{wang2023direct} which are, to the authors knowledge, the tightest available. The bi-Lipschitz bounds are trainable parameters, and their ratio $\tfrac{\nu}{\mu}$ can be considered a tunable \textit{distortion} parameter, describing how much the learnt representation of $\Xf$ distorts from a unit ball, and therefore how much the learnt trajectories can deviate from straight lines -- notice that this also appears in the overshoot constant for our exponential convergence bound \eqref{eq:x-exp-stable}. The lower bound $\mu$ also appears in our bound on velocity \eqref{eq:vel_bound}.

BiLipNets have a number of other advantages: firstly, BiLipNets admits a direct model parameterization, which allows training within the standard unconstrained optimization methods such as stochastic gradient descent. Secondly, the feedthrough layer architecture can improve the model expressivity without suffering from vanishing gradients. Thirdly, BiLipNets have  a structure that admits fast splitting-based solvers for computing the model inverse.

\subsection{Loss Function}
The loss function will generally include two components. The first component trains the diffeomorphism to map the safe set $\Xf$ onto the unit ball. However, this leaves substantial flexibility for the shape of the mapping \textit{inside} $\Xf$, so a second task-specific loss term can be employed which may take many forms, e.g. training the dynamics \eqref{eq:natural_gradient} to mimic demonstration trajectories.

For the first task, i.e., achieving \eqref{eq:levelset}, we choose the following loss function 
\begin{equation}\label{eq: loss base}
    \mathcal{L}(\theta)=\mathcal{L}_{\text{safe}}(\theta)+\mathcal{L}_{\text{unsafe}}(\theta)
\end{equation}
where
\begin{equation*}
    \begin{split}
        \mathcal{L}_{\text{safe}}(\theta)&=\frac{1}{|\Bar{\mathcal{D}}_{\text{safe}}|}\sum \max\bigl(\tilde{V}_\theta(x_i)-c_i, 0\bigr)^2, \\
        \mathcal{L}_{\text{unsafe}}(\theta)&=\frac{1}{|\Bar{\mathcal{D}}_{\text{unsafe}}|}\sum \max\bigl(c_j-\tilde{V}_\theta(x_j), 0\bigr)^2.
    \end{split}
\end{equation*}
The term $\mathcal{L}_{\text{safe}}$ penalizes the samples from $\Bar{\mathcal{D}}_{\text{safe}}$ for which $\tilde{V}_\theta(x_i)>c_i$ is required to ensure \eqref{eq:levelset-safe}, while $\mathcal{L}_{\text{unsafe}}$ penalizes the samples from $\Bar{\mathcal{D}}_{\text{unsafe}}$ for which $\tilde{V}_\theta(x_j)<\Bar{c}$ to ensure \eqref{eq:levelset-unsafe}. 

The loss function for the second task may take various forms. E.g., when the demonstration dataset is available, we can define 
\begin{equation*}
    \mathcal{L}_{\text{task}}(\theta)=\frac{1}{|\mathcal{D}_{\text{demo}}|}\sum\bigr(  \Dot{x}_k - f_\theta(x_k, x_{k,\star}) \bigl)^2
\end{equation*}
where $f_\theta$ is the vector filed in \eqref{eq:natural_gradient} with $g_\theta$. The total loss is taken as $\mathcal{L}_{t}=\mathcal{L}+\rho\mathcal{L}_{\text{task}}$ with weighting $\rho>0$.

\section{Numerical Experiments}\label{sec:numeric experiment}

We illustrate the proposed approach on a 2D corridor navigation task (see Fig.~\ref{fig:rrt}), where we aim to generate safe and smooth trajectories from any initial configuration $x_0\in\Xf$ to any goal $\xgoal\in\Xf$ in the presence of geometric obstacles. All experiments are implemented in Python using JAX and executed on an NVIDIA RTX 4090 GPU.
Code is available at \url{https://github.com/acfr/Goal-Conditioned-Safe-ODE}.

\subsection{Data Generation and Training details}
As shown in Fig.~\ref{fig:rrt} (left), we initialized RRT \cite{LaValle1998RapidlyexploringRT} at a fixed goal $\hat{x}_\star$ to generate a shortest-path tree over $\Xf$. This automatically provides the dataset pair $(x_i,d(x_i))$ where $d(x_i)$ is the cost-to-go with $x_i\in\Xf$, see Fig.~\ref{fig:rrt} (right). We take 2,500 samples to formulate the dataset $\bar{\mathcal{D}}_{\text{safe}}$ in \eqref{eq:d_safe}. Another 2,500 samples are uniformly sampled in $\Xo$, which forms $\bar{\mathcal{D}}_{\text{unsafe}}$ in \eqref{eq:d_unsafe}.  
\begin{figure}[!tb]
    \centering
    \includegraphics[width=\linewidth]{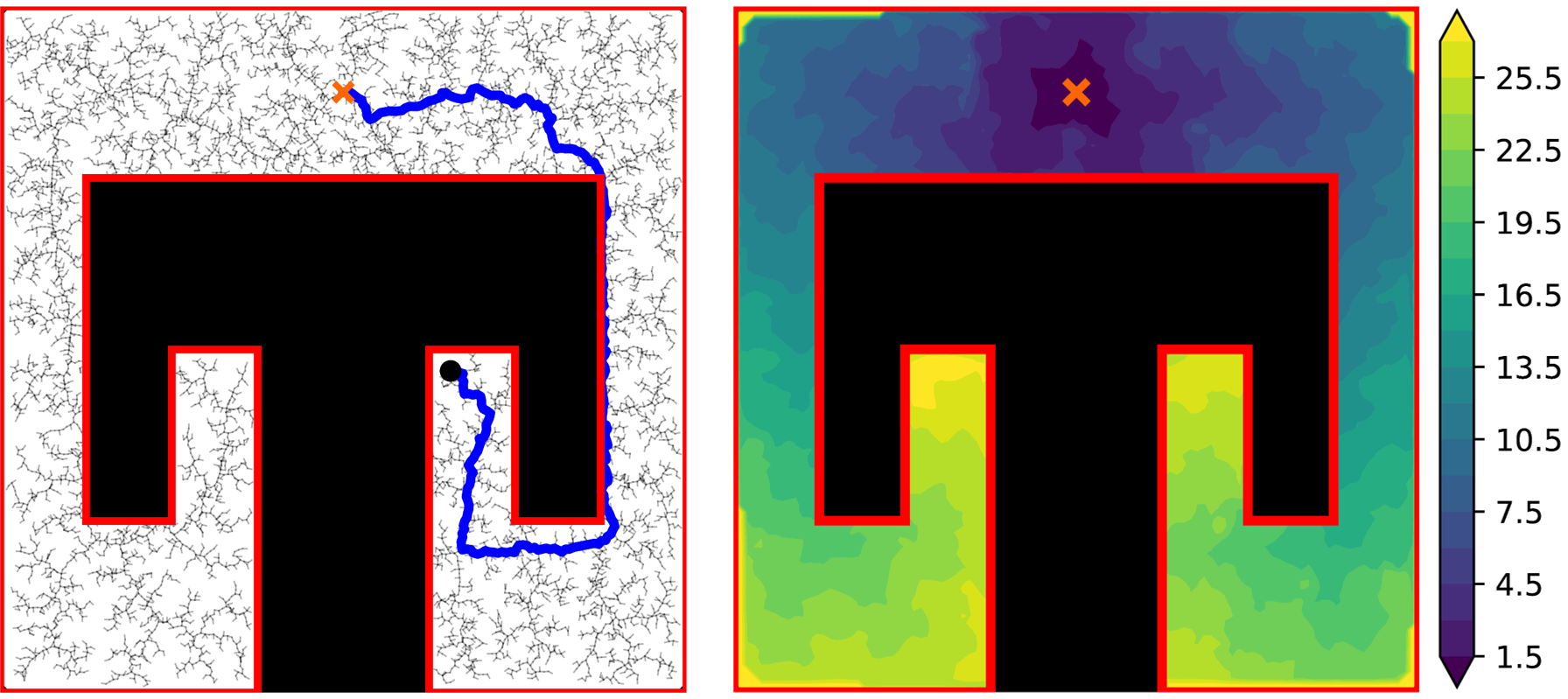}
    \caption{
    RRT data (gray) in the corridor environment. (Left) RRT rooted at $\hat{x}_{\star}$, with a representative trajectory (blue) in $\X$. (Right) Corresponding cost-to-go field $d(\cdot)$ visualized via contour lines over $\Xf$.}\label{fig:rrt}
\end{figure} 

We use BiLipNet from~\cite{wang2024monotone} to parameterize the bi-Lipschitz diffeomorphism $g_\theta$. 
The network is trained based on the loss function in \eqref{eq: loss base} via the Adam optimizer \cite{kingma2015adam} with a batch size of 16 for 1500 epochs.

\subsection{Results and Discussions}

Fig. \ref{fig:boundary_mapping} shows the learned mapping $g$ that transforms $\Xf'\subset \Xf$ in the $\X$-space (Left) to a unit ball $\mathcal{B}^2$ in the $\Z$-space (Right). $\partial \Xf'$ and $\partial \mathcal{B}^2$ are indicated by red curves while $\partial \Xf$ is in black. $\partial \Xf'$ conforms to the geometry of the obstacle boundaries, which is neither convex nor star-convex.

Fig. \ref{fig:natural_gradient_flow} shows that complex trajectories of the natural gradient flow system \eqref{eq:natural_gradient} (left) are equivariant to linear trajectories of system \eqref{eq:z_dynamics} (right) under the coordination change. Pulling back straight lines in $\Z$-space (Fig. \ref{fig:natural_gradient_flow} right) through $g_{\theta}^{-1}$ yields safe trajectories in $\Xf$ (Fig. \ref{fig:natural_gradient_flow} left) that respect the obstacle geometry. All trajectories in Fig. \ref{fig:natural_gradient_flow} (left) also converge to $\hat{x}_\star$.

\begin{figure}[!tb]
    \centering
    \begin{minipage}[c]{0.99\linewidth}
        \begin{minipage}[c]{0.46\linewidth}
            \centering
            \includegraphics[width=\linewidth]{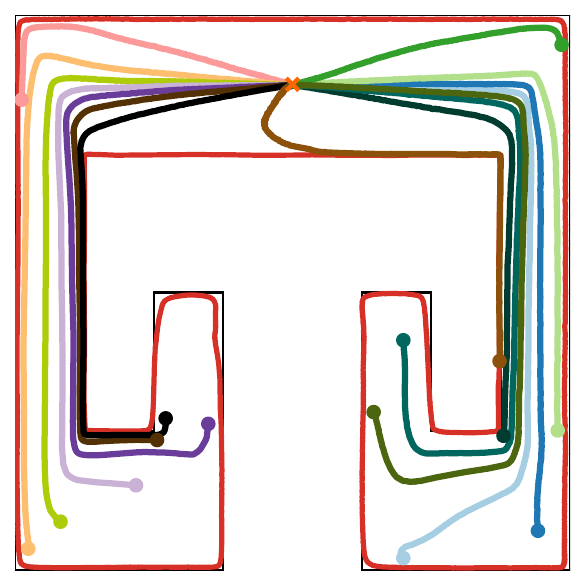}
        \end{minipage}
        \hfill
        \begin{minipage}[c]{0.52\linewidth}
            \centering
            \includegraphics[width=\linewidth]{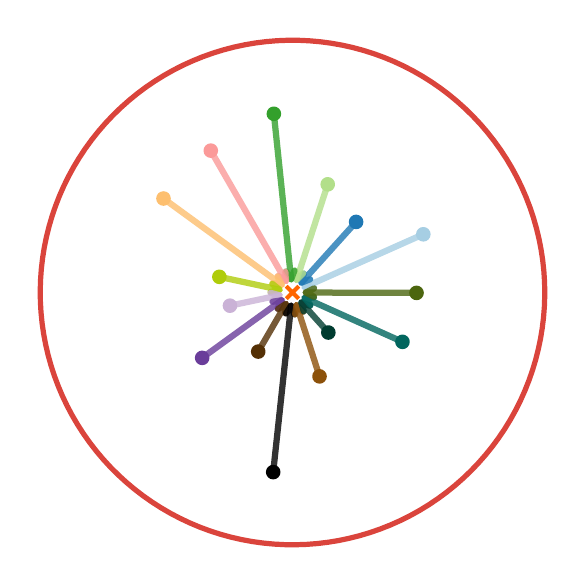}
        \end{minipage}
    \end{minipage}
    \begin{minipage}[c]{0.99\linewidth}
        \begin{minipage}[c]{0.46\linewidth}
            \centering
            \small Trajectories in $\X$ Space
        \end{minipage}
        \hfill
        \begin{minipage}[c]{0.52\linewidth}
            \centering
            \small Trajectories in $\Z$ Space
        \end{minipage}
    \end{minipage}
    \centering
    \caption{
    Trajectories generated by system \eqref{eq:natural_gradient} from multiple initial configurations to the goal $\hat{x}_\star$ in the training dataset. (Left) Smooth, safe paths in the $\X$-space. (Right) Those paths are transformed into straight-line trajectories in the $\Z$-space, demonstrating the geometric simplification induced by $g_{\theta}$.}
    \label{fig:natural_gradient_flow}
\end{figure} 
Fig.~\ref{fig:NGF-varying-goal} illustrates that multiple trajectories converge to a distinct, previously unseen goal $x_\star$ (red cross). This indicates system \eqref{eq:natural_gradient} is equilibrium-independent stable and safe. Note that the model was trained using data corresponding to a \textit{single} goal, but it generalizes gracefully to a goal in a completely different location.

\begin{figure}[!tb]
    \centering
    \begin{minipage}[c]{0.99\linewidth}
        \begin{minipage}[c]{0.46\linewidth}
            \centering
            \includegraphics[width=\linewidth]{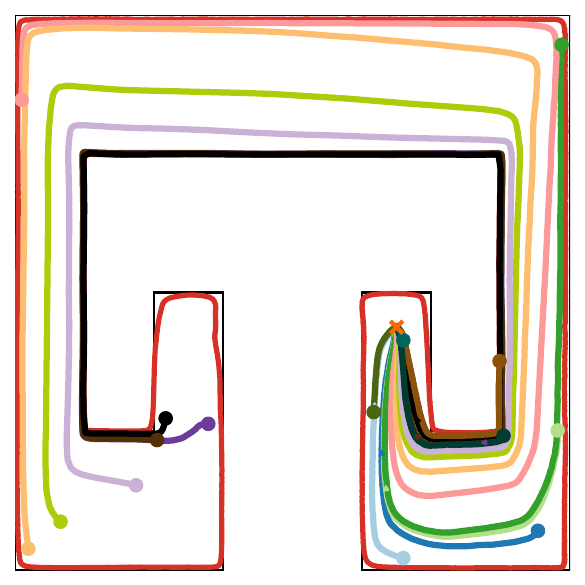}
        \end{minipage}
        \hfill
        \begin{minipage}[c]{0.52\linewidth}
            \centering
            \includegraphics[width=\linewidth]{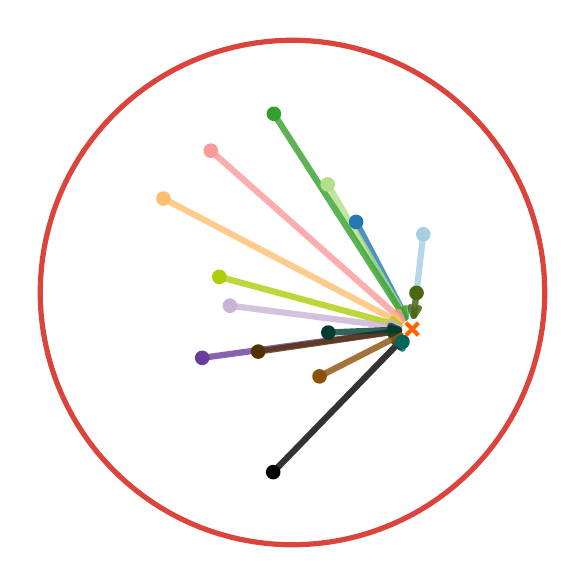}
        \end{minipage}
    \end{minipage}
    \begin{minipage}[c]{0.99\linewidth}
        \begin{minipage}[c]{0.46\linewidth}
            \centering
            \small Trajectories in $\X$ Space
        \end{minipage}
        \hfill
        \begin{minipage}[c]{0.52\linewidth}
            \centering
            \small Trajectories in $\Z$ Space
        \end{minipage}
    \end{minipage}
    \centering
    \caption{
    Generalization to previously unseen goal $\xgoal$ without retraining. (Left) Smooth, safe paths in the $\X$-space converging to a new goal $\xgoal$ (red cross) from multiple initial configurations. (Right) In the $\Z$-space, the paths are transformed into straight-line trajectories within the unit ball.
    }\label{fig:NGF-varying-goal}
\end{figure} 

\section{Conclusion}\label{sec:conclusion}
In this paper, we presented a learning-based approach for safe, stable, and smooth all-pairs motion planning. Our approach uses a bi-Lipschitz diffeomorphism to transform a geometrically complex safe set into the unit ball, and complex motions within this set into simple linear stable dynamics which corresponds to a goal-conditioned natural gradient in the original state space. This approach guarantees that both safety and exponential stability are preserved regardless of the goal location within the safe set. Empirical results in 2D corridor navigation task illustrate the the proposed approach. 

\bibliographystyle{ieeetr}
\bibliography{ref}

\end{document}